\documentclass{hld2024} 

\usepackage[utf8]{inputenc} 
\usepackage[T1]{fontenc}    
\usepackage{hyperref}       
\usepackage{url}            
\usepackage{booktabs}       
\usepackage{nicefrac}       
\usepackage{microtype}      
\usepackage{xcolor}         
\usepackage{amsfonts, amsmath}
\usepackage{algorithm}
\usepackage{algpseudocode}
\usepackage{tikz}
\usepackage{mwe} 
\usepackage{enumitem}
\usepackage{adjustbox}

\newtheorem{defintion}{Definition}
\usetikzlibrary{arrows.meta,bending,chains,decorations.markings,decorations.pathreplacing,positioning}

\newcommand{\resterm}{\mathcal{F}}
\newcommand{\skipterm}{\mathcal{S}}

\newcommand{\skiplayerset}{\mathcal{K}_{\skipterm}}
\newcommand{\inblock}{\mathcal{V}}
\newcommand{\outblock}{\mathcal{H}}
\newcommand{\earlyexit}{\mathcal{E}}
\newcommand{\lossfunc}{\mathcal{L}}
\newcommand{\sample}{x^{(i)}}
\newcommand{\outsample}{\hat{y}^{(i)}}

\title{Towards an Optimal Control Perspective of ResNet Training}

%

\hldauthor{%
	\Name{Jens Püttschneider} \Email{jens.puettschneider@tu-dortmund.de}\\
	\addr{Institute of Energy Systems, Energy Efficiency and Energy Economics, TU Dortmund, Germany}
	\AND
	\Name{Simon Heilig} \Email{simon.heilig@rub.de}
	\\\addr{Department of Computer Science, Ruhr University Bochum, Germany}
	\AND
	\Name{Asja Fischer} \Email{asja.fischer@rub.de}\\
	\addr{Department of Computer Science, Ruhr University Bochum, Germany}
	\AND
	\Name{Timm Faulwasser} \Email{timm.faulwasser@ieee.org}\\
	\addr{Institute of Control Systems, Hamburg University of Technology, Germany}
}

\begin{document}

\maketitle

\begin{abstract}
We propose a training formulation for ResNets reflecting an optimal control problem that is applicable for standard architectures and general loss functions.
We suggest bridging both worlds via penalizing intermediate outputs of hidden states corresponding to stage cost terms in optimal control.
For standard ResNets, we obtain intermediate outputs by propagating the state through the subsequent skip connections and the output layer.
We demonstrate that our training dynamic biases the weights of the unnecessary deeper residual layers to vanish. This indicates the potential for a theory-grounded layer pruning strategy.
\end{abstract}

\section{Introduction}
Analyzing properties of neural networks and their training dynamics is one of the promising avenues towards trustworthy applications of modern AI. 
In this work, we take the optimal control perspective of neural network training in which the forward propagation of the data over the layers of the network is considered as the state trajectory of a dynamical system \cite{esteve2021largetime, ESTEVEYAGUE2023105452, FAULWASSER2024100290, puttschneider2024dissipativity}. 
The trainable parameters act as control signals that optimally steer the state towards a target representation.
This way, the training emulates the numerical solution to an optimal control problem, resulting in parameters that minimize the loss function evaluated for the terminal state.
The systems and control perspective is particularly insightful for ResNets \cite{10.1007/978-3-319-46493-0_38}, since they represent Euler forward discretizations of continuous-time neural ODEs \cite{chang2018reversible}.
Recent works on optimal control training of ResNets suggest augmenting the training objective by the loss function evaluated for intermediate outputs placed at the states of the hidden residual blocks \cite{esteve2021largetime,  pmlr-v235-miao24a, puttschneider2024dissipativity}. 
From an optimal control perspective, this corresponds to a stage cost in the training objective \footnote{The stage cost loss from optimal control is similar to early exiting, where intermediate predictions are obtained from hidden states. 
In joint training formulations, similar to optimal control, the loss functions corresponding to  the intermediate predictions are included in the training objective \cite{pmlr-v38-lee15a, 10.1145/3698767, 10.1145/3527155}.}.
Placed within the interpolation regime, the training formulation with stage cost gives rise to a self-regularizing ResNet possessing asymptotic behaviors, in which, with sufficient depth, the weights decay to zero and the hidden states approach their target state \cite{esteve2021largetime, ESTEVEYAGUE2023105452, FAULWASSER2024100290, puttschneider2024dissipativity}. 
However, the dynamic systems perspective on ResNets only addresses architectures with a fixed latent dimension. While the work \cite{esteve2021largetime} investigates varying latent dimensions via a continuous space-time neural network, an optimal control perspective to real-world architectures remains an open problem.
Moreover, existing results on stage cost formulations for ResNet training place somewhat restrictive assumptions on the loss function, thereby excluding the most commonly used one -- the cross-entropy with binary targets.

In this paper, we show how this optimal control formulation with stage cost can be extended to more general architectures, including standard ResNets \cite{10.1007/978-3-319-46493-0_38} and generic loss functions. 
For ResNet architectures with 1x1 convolutions in the skip connections, incorporating skip connections in the intermediate outputs and including their respective losses as a stage cost recovers a training scheme following the solution of an optimal control problem. 
By introducing the notion of shallower \textit{SubResNets}, we prove that the performance of the intermediate outputs of the stage cost ResNet is asymptotically bounded by the performance of the shallower SubResNet trained using standard training.
Our asymptotic results show that residual blocks tend to learn identity mappings after forward propagation sufficiently accomplishes the learning task, hence self-regularizing the depth of the network. 
Empirically, we find that our ResNet trained with stage cost converges much faster in terms of depth to near near optimal values of the loss function.  
In short, our paper makes the following contributions:
\begin{itemize}[topsep=0.2em, partopsep=0pt]\itemsep-0.2em
    \item We propose a training scheme for residual-connection based neural networks derived from an optimal control perspective, including stage cost evaluation of loss functions.
    \item We derive asymptotic results on the loss dynamics, pointing towards self-regularizing training in terms of optimal depth.
    \item We empirically assess the convergence of the loss and quantify our loss bounds on standard benchmarks.
\end{itemize}

\section{ResNet Notation}
\begin{figure}
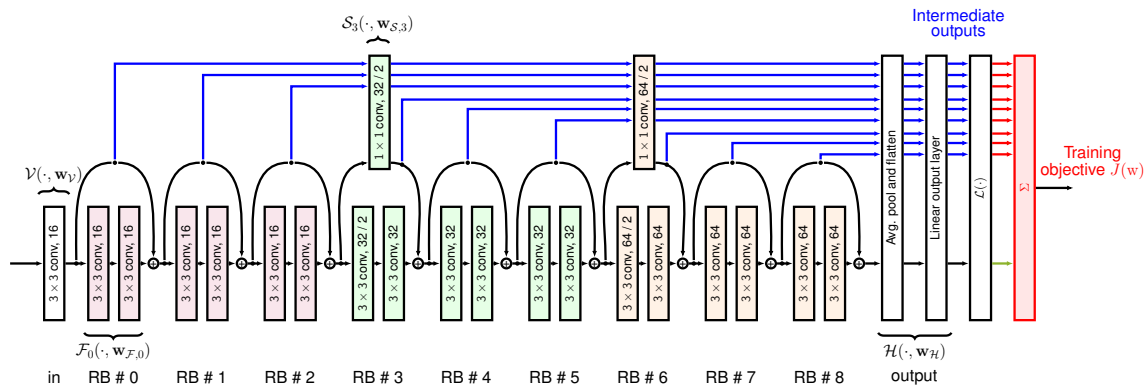

	\newif\ifincludestagecost
	\includestagecosttrue
	\include{figure/resnet}
    \vspace*{-1.2cm}
	\caption{Illustration of a ResNet-v2 architecture while training with our proposed stage cost regularizer penalizing intermediate outputs. We propose to forward the hidden states through the network's non-trivial skip connections to account for dimensionality projections.}
	\vspace{-0.8cm}
	\label{fig:resnet_cifar}
\end{figure}
We briefly revisit the core components of the ResNet-v2 \citep{10.1007/978-3-319-46493-0_38,he2016deep} architecture which is the canvas for this work.

As illustrated in Figure \ref{fig:resnet_cifar}, the forward pass goes as follows: a sample $\sample$ is initially embedded into a latent space via $\inblock(\sample, \mathbf{w}_\inblock)$, then $N$ residual blocks process the input to eventually forward the final representation $\sample_N$ to the output head $\outblock(\sample_N, \mathbf{w}_\outblock)$.
We denote trainable weights as $\mathbf{w} = \left[ \mathbf{w}_{\inblock},\mathbf{w}_{0},...,\mathbf{w}_{N-1}, \mathbf{w}_\outblock \right]$, where $\mathbf{w}_k = \left[\mathbf{w}_{\resterm,k},\mathbf{w}_{\skipterm,k}\right]$ corresponds to the $k$th residual block $f_k(\sample_k, \mathbf{w}_{k})$ that is composed of the convolutional layers $\mathcal{F}_k(\sample_k, \mathbf{w}_{F,k})$ and the skip connection $\skipterm_k (\sample_k, \mathbf{w}_{\skipterm,k})$ with optional parameters $\mathbf{w}_{\skipterm,k}$. If two consecutive blocks have the same dimensionality, an identity skip connection is employed, but if there is a change in dimensionality, a 1x1 convolution is used as a projection layer. For the ease of reading, we denote $\skiplayerset$ as the index set of blocks having non-trivial skip connections.
In summary, a ResNet-$N$ with $N$ residual blocks comprises the following system:
\begin{equation}
	\label{eq:simple_resnet_dynamics}
	\begin{aligned}
 		\sample_0 &= \inblock(\sample, \mathbf{w}_\inblock) \enspace, \\	 
		\sample_{k+1} 
		&= f_k(\sample_k, \mathbf{w}_k) =  \begin{cases}
		\sample_k &+ \resterm_k(\sample_k, \mathbf{w}_{\resterm,k}), \; k \notin \skiplayerset \\
		\skipterm_k (\sample_k, \mathbf{w}_{\skipterm,k}) &+ \resterm_k(\sample_k, \mathbf{w}_{\resterm,k}), \; k \in \skiplayerset, 
		\end{cases}
		\; &\forall k = 0, ..., N&-1 \enspace,\\
		\outsample &= \outblock(\sample_N, \mathbf{w}_\outblock) \enspace.
	\end{aligned}
\end{equation}

A batch of $D$ samples $\mathbf{x} = \left[x^{(1)}, x^{(2)}, \ldots, x^{(D)}\right]$ and its propagation through the network is denoted by boldface variables. Assuming a constant state dimension in the hidden residual blocks, the propagation of the data in the residual blocks can be understood as a time-varying dynamical system: $\mathbf{x}_{k+1} = f_k(\mathbf{x}_k, \mathbf{w}_k) = \mathbf{x}_k + \resterm_k(\mathbf{x}_k, \mathbf{w}_{\resterm,k})$, with stacked data $\mathbf{x}_k$ as state variable and the trainable parameters $\mathbf{w}_k$ as control inputs.
The objective function of standard neural network training is the loss 
$\lossfunc$ at the network output, which corresponds to the terminal penalty in optimal control. 
In addition to common weight decay, optimal control approaches to ResNet training have suggested including a stage cost regularizer $\ell(\mathbf{x}_k, \mathbf{w}_k)$ that depends also on the states $\mathbf{x}_k$ \cite{esteve2021largetime, FAULWASSER2024100290, pmlr-v235-miao24a, puttschneider2024dissipativity}. This leads to the training objective 
$J(\mathbf{w})=  \sum_{k = 0}^{N-1}  \ell(\mathbf{x}_k, \mathbf{w}_k) + \lossfunc (\hat{\mathbf{y}})$.

\section{Optimal Control Perspective on Standard ResNets}
\label{sec:oc_perspective_resnets}
We now extend the optimal control formulation of ResNet training to standard ResNets with non-identity skip connections \cite{10.1007/978-3-319-46493-0_38} that allows for a theoretical analysis of the training. To invoke the training objective using standard ResNet \eqref{eq:simple_resnet_dynamics}, we need a tractable notion of the stage costs $\ell(\mathbf{x}_k, \mathbf{w}_k)$. To this end, we design the intermediate outputs to reuse the existing loss function based on the following observation. 

Consider a ResNet architecture \eqref{eq:simple_resnet_dynamics} with $N$ residual blocks, in which the parameters of the residual branches for $M \leq k \leq N-1$ are set to zero $\mathbf{w}_{\resterm,k}=0$. 
Then, the hidden state $\sample_M$ is forwarded only by the skip connections of the following residual blocks, resulting in the output
$
	\outsample = \outblock(\skipterm_{N-1}(\cdots \skipterm_{M}(\sample_M, \mathbf{w}_{\skipterm,M}), \cdots,  \mathbf{w}_{\skipterm,N-1}), \mathbf{w}_{\outblock})
$.
Moreover, since most residual blocks have identity skip connections, the remaining number of layers the state $\sample_M$ gets propagated is comparatively low.
Hence, we can use those non-trivial skip connections as building blocks for obtaining the intermediate output of state $\sample_M$, since, similar to the standard design of early exits, they should consist of few layers for computational efficiency \cite{10.1145/3698767, 10.1145/3527155}.
This leads us to the definition of intermediate output heads $\earlyexit_k$, which predict $\outsample_k$ based on intermediate states $\sample_k$ via:
\begin{equation}
		\outsample_k 
		= \earlyexit_k(\sample_k, \mathbf{w}_{\earlyexit,k}) 
		= \outblock(\skipterm_{N-1}(\cdots \skipterm_{k}(\sample_k, \mathbf{w}_{\skipterm,k}), \cdots,  \mathbf{w}_{\skipterm,N-1}), \mathbf{w}_{\outblock}) \quad\forall k = 0, \dots, N-1.
	\label{eq:proposed_early_exit}
\end{equation}
Figure \ref{fig:resnet_cifar} illustrates the proposed flow of data through the skip connections of the original blocks.
Now, if after residual block $M < N$ the parameters of the residual terms are set to zero, i.e., $\mathbf{w}_{\resterm, k} = 0$ for all $k = M, ..., N-1$, then all subsequent intermediate outputs and the final output are equal to the intermediate output $\hat{\mathbf{y}}_M$, i.e., 
$\hat{\mathbf{y}}_M$, $\hat{\mathbf{y}} = \hat{\mathbf{y}} _k = \hat{\mathbf{y}}_M $ for all $k = M, ..., N$.
A formal result of this is provided in Lemma \ref{lemma:identity_mapping} in the Appendix.
For a ResNet with only identity skip connections, the identity mapping also holds for the hidden states of the residual blocks, i.e., $\mathbf{w}_{\resterm, k} = 0$ results in $\mathbf{x}_k = \mathbf{x}_{k+1}$, which, from a control perspective, corresponds to an equilibrium state.
The parameters $\mathbf{w}_{\earlyexit,k}$ of the intermediate outputs can be either shared with the backbone network (i.e., $\mathbf{w}_{\earlyexit,k} = \left[ \mathbf{w}_{\skipterm, k}, ..., \mathbf{w}_{\skipterm,N-1}, \mathbf{w}_{\outblock}\right]$) or can be additional parameters for increased representational capacity. This way, we can re-interpret or fine-tune ResNets trained without our intermediate outputs, since all necessary parameters are at hand.

Now, our optimal control problem implements the stage cost as the prediction loss of the intermediate output \eqref{eq:proposed_early_exit} with a weight $\gamma \geq 0$, where $\gamma=0$ corresponds to the standard training,
\begin{equation}	\label{eq:training_problem}
		\min_{\mathbf{w}} \; J_N(\mathbf{w})=  \sum_{k = 0}^{N-1} \gamma \lossfunc (\hat{\mathbf{y}}_k) + \lossfunc (\hat{\mathbf{y}}) \quad  \text{subject to} \quad \text{\eqref{eq:simple_resnet_dynamics}, \eqref{eq:proposed_early_exit}},  \; \forall k = 0, ..., N-1.
\end{equation}
Next, we use this setup to derive bounds on the loss behavior of a deep ResNet trained with the stage cost loss by relating it to the loss of a shallower network sharing the architecture.
To this end, we define a ResNet-$M$ with $M$ residual blocks and $M < N$ as a SubResNet architecture of a deeper ResNet-$N$ with $N$ residual blocks if the output of ResNet-$M$ corresponds to the same architecture as the $M$-th intermediate output of ResNet-$N$.
A formal definition is provided in Definition~\ref{def:subresnet} in Appendix~\ref{sec:depth_bound}.
This implies that if for a ResNet-$N$ and its SubResNet ResNet-$M$ the shared parameters are equal, then the intermediate output of ResNet-$N$ after $M$ residual blocks $\hat{\mathbf{y}}^N_M$ will equal the output $\hat{\mathbf{y}}^M$ of the shallower ResNet-$M$, i.e., $\hat{\mathbf{y}}^N_M = \hat{\mathbf{y}}^M$.
Moreover, if the additional parameters of ResNet-$N$, namely the parameters of its residual blocks $M, \ldots, N-1$, are set to zero, i.e., $\mathbf{w}^N_{\resterm, k} = 0$ for $k = M, \ldots, N{-}1$, then all subsequent intermediate outputs of the deeper ResNet-N will produce the same prediction $\hat{\mathbf{y}}^{N}_k = \hat{\mathbf{y}}^{N} = \hat{\mathbf{y}}^{M}$, for all $k = M, \ldots, N$.
This relationship is formalized in Lemma~\ref{lemma:identity_mapping_subresnets} in Appendix~\ref{sec:depth_bound}.
Building upon the identity mapping property of the ResNet, we now derive the following result on the asymptotic loss behavior of a ResNet trained with stage cost, based on the performance of a shallower SubResNet with traditional training.
\begin{theorem}[ResNet asymptotics with weight decay]
	\label{thrm:deep_resnet_asymptotics_wd}
	Consider a ResNet-$N$ with $N$ residual blocks trained by \eqref{eq:training_problem} with stage cost weight $\gamma > 0$, 
	and a weight decay $\dfrac{\lambda }{2} \lVert\mathbf{w}^N_{\resterm,k}\rVert^2$ on the parameters on the residual terms $\mathbf{w}^N_{\resterm,k}$.
	Let ResNet-M be its SubResNet (Definition \ref{def:subresnet}), trained without the stage cost but the same weight decay.
	Let $\bar{\mathcal{L}}=\lossfunc(\hat{\mathbf{y}}^M)$ be the loss function of the output of ResNet-M.
	Then, the training objective for the optimal parameters $\mathbf{w}^N$ of ResNet-$N$ satisfies the upper bound
	\begin{equation*}
		J_N(\mathbf{w}^N) = \sum_{k = 0}^{N-1} \left[ \gamma \lossfunc(\hat{\mathbf{y}}^N_k)  +
		\frac{\lambda}{2} \lVert\mathbf{w}^N_{\resterm,k}\rVert^2 \right] + 
		\lossfunc(\hat{\mathbf{y}}^N) 
		\leq 
		\bar{J}_N,
	\end{equation*}
	with $\bar{J}_N = \sum_{k = 0}^{M-1} \left[\gamma \lossfunc(\hat{\mathbf{y}}^M_k) + \frac{\lambda}{2} \lVert\mathbf{w}^M_{\resterm,k}\rVert^2 \right]
	+ (1 + \gamma(N - M)) \bar{\mathcal{L}}$ based on ResNet-$M$.
\end{theorem}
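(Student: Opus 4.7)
The plan is to prove the bound by the classical optimization argument: since $\mathbf{w}^N$ is optimal for the deep training problem \eqref{eq:training_problem}, any feasible choice of parameters $\tilde{\mathbf{w}}^N$ yields an upper bound $J_N(\mathbf{w}^N) \leq J_N(\tilde{\mathbf{w}}^N)$. The whole proof reduces to constructing one well-chosen candidate $\tilde{\mathbf{w}}^N$ whose objective value is exactly $\bar{J}_N$.

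First, I would construct $\tilde{\mathbf{w}}^N$ by lifting the optimal SubResNet parameters $\mathbf{w}^M$ into the deeper architecture. Concretely, for the input stem, the first $M$ residual blocks, and every skip connection and output block that ResNet-$N$ shares with ResNet-$M$ (which exist by Definition \ref{def:subresnet} of a SubResNet), copy the corresponding values of $\mathbf{w}^M$. For the remaining residual terms $k = M, \ldots, N-1$, set $\tilde{\mathbf{w}}^N_{\resterm,k} = 0$, and for the skip-connection parameters $\tilde{\mathbf{w}}^N_{\skipterm,k}$ with $k \in \skiplayerset$ and $k \geq M$ (if any) pick any feasible value. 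Intermediate-output parameters are then determined by the choice \eqref{eq:proposed_early_exit}.

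Next, I would evaluate $J_N(\tilde{\mathbf{w}}^N)$ by splitting the sum at index $M$. For $k = 0, \ldots, M-1$, the shared-parameter property in the definition of SubResNet implies that the hidden states and intermediate outputs of ResNet-$N$ coincide with those of ResNet-$M$, so $\hat{\mathbf{y}}^N_k = \hat{\mathbf{y}}^M_k$ and $\lossfunc(\hat{\mathbf{y}}^N_k) = \lossfunc(\hat{\mathbf{y}}^M_k)$; the weight decay contributions $\tfrac{\lambda}{2}\lVert\tilde{\mathbf{w}}^N_{\resterm,k}\rVert^2$ are also equal to $\tfrac{\lambda}{2}\lVert\mathbf{w}^M_{\resterm,k}\rVert^2$. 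For $k = M, \ldots, N-1$, apply Lemma~\ref{lemma:identity_mapping_subresnets} from the appendix: zeroing the residual branches from index $M$ onward forces all remaining intermediate outputs and the terminal output to collapse to $\hat{\mathbf{y}}^M$, so each $\lossfunc(\hat{\mathbf{y}}^N_k) = \bar{\mathcal{L}}$ and $\lossfunc(\hat{\mathbf{y}}^N) = \bar{\mathcal{L}}$; the weight decay terms vanish because $\tilde{\mathbf{w}}^N_{\resterm,k} = 0$. Summing the two parts yields $\sum_{k=0}^{M-1}[\gamma\lossfunc(\hat{\mathbf{y}}^M_k) + \tfrac{\lambda}{2}\lVert\mathbf{w}^M_{\resterm,k}\rVert^2] + \gamma(N-M)\bar{\mathcal{L}} + \bar{\mathcal{L}} = \bar{J}_N$. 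Optimality of $\mathbf{w}^N$ then delivers $J_N(\mathbf{w}^N) \leq J_N(\tilde{\mathbf{w}}^N) = \bar{J}_N$.

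The only subtle step is bookkeeping around the skip connections: one must argue that copying the SubResNet's shared weights is indeed feasible inside ResNet-$N$ and that, after zeroing the late residual branches, the forward pass of ResNet-$N$ really reproduces $\hat{\mathbf{y}}^M$ at every intermediate head $\earlyexit_k$ for $k \geq M$. This is exactly the content of Lemma~\ref{lemma:identity_mapping_subresnets}, so invoking it turns the potential obstacle into a one-line reference; the rest is a direct substitution into $J_N$.
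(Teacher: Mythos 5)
Your proposal is correct and follows essentially the same route as the paper's own proof: construct the feasible candidate by copying the SubResNet parameters, zeroing the residual branches from block $M$ onward, invoke Lemma~\ref{lemma:identity_mapping_subresnets} to collapse all later intermediate outputs to $\hat{\mathbf{y}}^M$, evaluate the objective to get exactly $\bar{J}_N$, and conclude by optimality of $\mathbf{w}^N$. No gaps; your handling of the shared skip-connection/output-head weights matches the paper's use of Definition~\ref{def:subresnet}.
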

The proof is provided in Appendix~\ref{sec:depth_bound}.
If no weight decay is used, an adapted version of this bound yields $\mathcal{L}_\mathrm{avg}= \frac{1}{N+1} \sum_{k = 0}^{N} \lossfunc(\hat{\mathbf{y}}^N_k) \leq \bar{ \mathcal{L}}_\mathrm{avg} = \bar{\mathcal{L}} + \frac{C}{N+1}$, with $C< \infty$ based on the SubResNet-$M$.
Moreover, if a loss function that attains its minimum at zero is used, e.g., the L2 loss, and assuming that there exists a SubResNet of depth $N$ such that $\bar{\mathcal{L}} =0$, then this bound tightens to $\mathcal{L}_\mathrm{avg} \leq \frac{C}{N+1}$.
This result is formalized in Theorem~\ref{lemma:deep_resnet_asymptotics} in Appendix~\ref{sec:depth_bound}.

\section{Experiments}\label{sec:experiments}
In our experiments, we train ResNets on MNIST \cite{lecun1998}, CIFAR-10, and CIFAR-100 \cite{krizhevsky2009learning} using the proposed stage cost loss and compare their loss and accuracy trajectories to those of ResNets of varying depth trained with standard training.
The results for CIFAR-10 are shown in Figure \ref{fig:cifar10_standard} and the implementation details are provided in Appendix \ref{sec:experimental_setup}.
\paragraph{Results.}
Our empirical evaluation shows that the bounds provided in \ref{thrm:deep_resnet_asymptotics_wd} and \ref{lemma:deep_resnet_asymptotics} are relatively tight,
see Appendix \ref{sec:depth_bound}.
The loss trajectories shown in Figure \hyperref[fig:loss_cifar10_standard]{2 (b)} show that ResNets trained with the proposed stage cost already achieve a good fit after 12 residual blocks, reaching 78.74\% test accuracy for the ResNet\textsubscript{WS} in which the parameters of the intermediate outputs are shared.
Performance plateaus for the rest of the first stage of layers with the same number of filters and improves at the onset of subsequent stages with increasing representational capacity.
In contrast, networks trained with standard training only show a good fit at their final output, but with higher test accuracy, especially in the first two stages.
Appendix~\ref{sec:additional_exp} provides an in-depth analysis of intermediate output convergence and parameter dynamics, along with results from training a homogeneous ResNet with a fixed number of filters.
Interestingly, our stage cost loss formulation leads to a principled way of identifying layers that could be pruned due to insignificant residual contribution.
Table~\ref{tab:test_accuracies} compares test accuracies of pruned SubResNet-12 to standard training, showing pruned homogeneous models differ only up to 3.5\% points, while it remains challenging to prune standard ResNets nearly lossless. 
\begin{figure}[t]
	\centering
	\subfigure[Loss.]{%
		\label{fig:loss_cifar10_standard}
		\includegraphics[trim=0 0 225 0, clip, height=3.8cm]{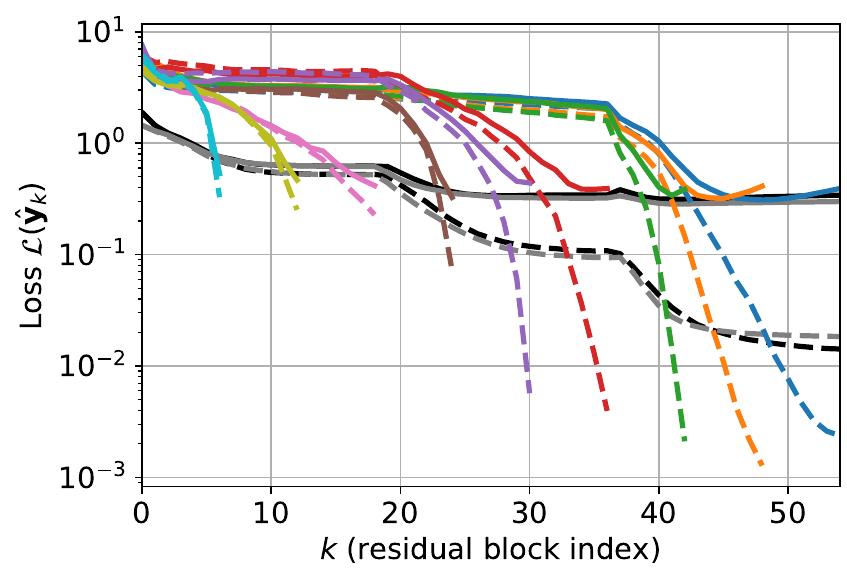}
	}
	\subfigure[Test accuracy.\hspace{2cm}]{%
		\includegraphics[trim=0 0 0 0, clip, height=3.8cm]{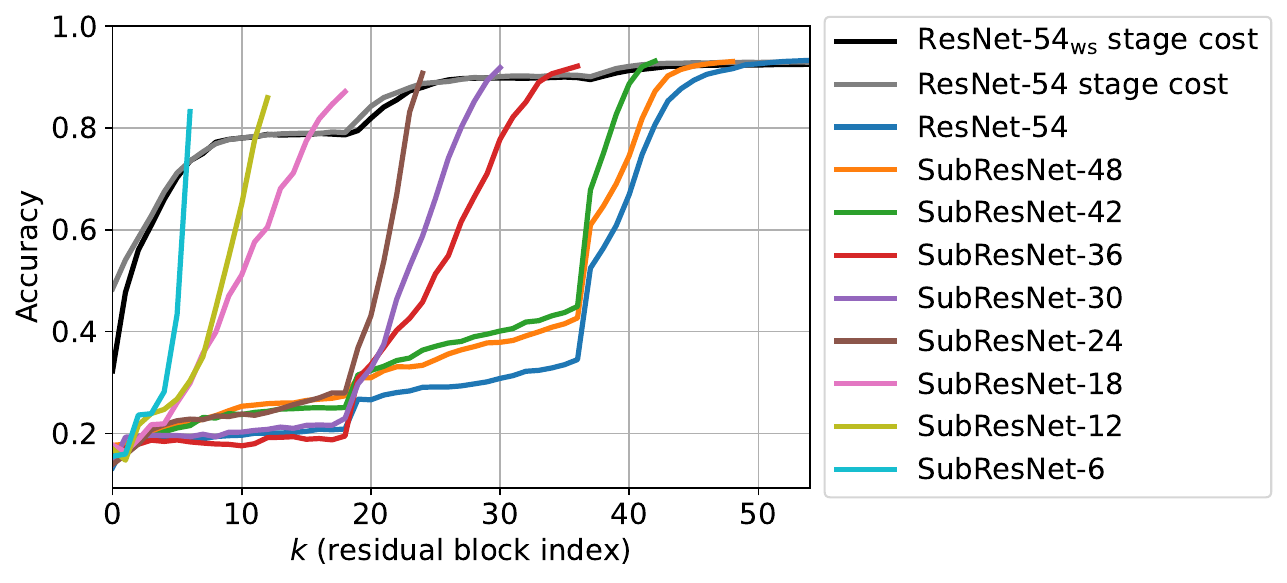}\label{fig:accuracy_cifar10_standard}
	}
	\vspace*{-0.3cm}
	\caption{Loss and accuracy trajectories for the standard ResNets trained on CIFAR-10 with the stage cost loss and their SubResNets of varying depth. The dashed loss lines are the training losses, and the solid lines are the test losses.}
	\label{fig:cifar10_standard}
    \vspace*{-0.5cm}
\end{figure}
\begin{table}[t]
	\centering
	\caption{Test accuracies (\%) and the number of parameters (in millions M) for ResNets trained with the stage cost loss and standard training.}	\label{tab:test_accuracies}
	\vspace*{-0.3cm}	
    \begin{adjustbox}{max width=0.95\textwidth}
	\begin{tabular}{lcccc}
		\toprule
		& \multicolumn{3}{c}{\textbf{Homogeneous ResNet}} & \multicolumn{1}{c}{\textbf{Standard ResNet}} \\
		\cmidrule(lr){2-4} \cmidrule(lr){5-5}
		\textbf{Model} & \textbf{MNIST} & \textbf{CIFAR-10} & \textbf{CIFAR-100} & \textbf{CIFAR-10} \\
		\midrule
		ResNet-54 standard training     					& 99.64 (1.0 M) 	& 93.05 (4.0 M) & 71.94 (4.0 M) & 93.28 (1.7 M) \\
		SubResNet-12 standard training        				& 99.63 (0.2 M)	& 91.43 (0.9 M) & 68.77 (0.9 M) & 85.93 (0.06 M) \\
		ResNet-54$_{\text{ws}}$ stage cost 					& 99.62 (1.0 M) 	& 91.59 (4.0 M) & 69.97 (4.0 M) & 92.51 (1.7 M) \\
		$\quad$ SubResNet-12 pruned							& 99.54 (0.2 M)	& 91.02 (0.9 M) & 66.55 (0.9 M) & 78.74  (0.06 M) \\
		ResNet-54 stage cost								& 99.68 (1.0 M) 	& 91.64 (4.0 M) & 69.18 (4.4 M) & 92.91 (1.9 M) \\
		$\quad$ SubResNet-12 pruned							& 99.64 (0.2 M)	& 91.26 (0.9 M) & 66.66 (0.9 M) & 78.64 (0.06 M) \\
		\bottomrule
	\end{tabular}
    \end{adjustbox}
\end{table}
\paragraph{Outlook.}
Our proposed training scheme offers promising future directions. The post-hoc pruning can be understood as finding the optimal SubResNet, computing the best representations for a downstream task. When considering varying dimensionality across residual blocks, our current analysis does not lead to straightforward pruning like in the simpler homogeneous models. In the future, we want to investigate the impact of the observed stage-wise dynamics, possibly leading us to tighter bounds and eventually surgical pruning of intermediate layers.

\section*{Acknowledgments}
This work has been partially funded by the German Federal Ministry of Research, Technology, and Space (BMFTR) via the 6GEM research hub (16KISK038), and by Deutsche Forschungsgemeinschaft (DFG, German Research Foundation) via the TRR 391 Spatio-temporal Statistics for the Transition of Energy and Transport (project 520388526) and the research unit Active Learning for Systems and Control (project 535860958).

\bibliography{sources}

\appendix

\section{Experimental Supplements}
\subsection{Setup}
\label{sec:experimental_setup}
The experimental setup is based on \cite{10.1007/978-3-319-46493-0_38}.
All of our experiments used a pre-activation ResNet with 54 residual blocks, with two convolutional layers per residual block. 
In contrast to our notation, \cite{10.1007/978-3-319-46493-0_38} defines the depth of a ResNet as the total number of layers, including convolutional and fully connected layers. 
Hence, the ResNet-54 in this paper corresponds to the ResNet-110 in \cite{10.1007/978-3-319-46493-0_38}.
The standard ResNet architecture consists of an input embedding with of a 3x3 convolution with 16 filters followed by a batch normalization and ReLU.
This is followed by 54 residual blocks organized in three stages, each with 18 residual blocks.
The convolutional layers within these stages use 16, 32, and 64 filters, respectively.
The output block consists of a batch normalization layer, a ReLU activation, global average pooling, and a final linear layer that predicts the logits.
Similar to \cite{10.1007/978-3-319-46493-0_38}, we train for 165 epochs with a batch size of 128 using SGD with an initial learning rate of $\alpha=0.1$  and a momentum of $\mu = 0.9$, after 82 epochs, the learning rate is divided by ten and again after 123 epochs.
A constant stage cost weight of $\gamma = 0.02$ is used for all experiments.

For ResNets trained without the stage cost, we reuse the backbone network parameters to compute the trajectory of the intermediate outputs.
When training with the stage cost, we consider two variants: ResNet\textsubscript{WS}, which shares parameters between the backbone and the intermediate outputs, and the full stage-cost ResNet, which introduces additional parameters for the intermediate outputs.
For the additional parameters associated with the intermediate outputs, the learning rate is set to $\alpha / \gamma$ to ensure that their effective learning rate matches that of the final output layer.
This latter approach increases the total number of parameters during training.
However, when pruning, these additional intermediate output blocks and their parameters are removed, such that all models have the same number of parameters.
In our training approach, the output block is applied to all hidden states.
For each of these states use a separate batch norm statistics, while sharing the batch norm weights and bias for the ResNet\textsubscript{WS}.

The homogeneous ResNets have a constant number of convolutional filters, $64$  CIFAR and $32$ for MNIST, in all residual blocks and the input embedding.
Apart from this modification, their architecture is identical to the standard ResNet.

\subsection{Additional Experiments}\label{sec:additional_exp}
For the standard ResNet shown in Figure \ref{fig:cifar10_standard}, we observe that accuracy and loss trajectories improve noticeably at the beginning of each stage.
This effect is likely due to the increased representational capacity resulting from the doubling of convolutional filters from one stage to the next.
To verify this, we consider a homogeneous ResNet, in which all residual blocks use a constant number of filters, fixed at 64.
The resulting loss and accuracy trajectories for CIFAR-10 are shown in Figure~\ref{fig:cifar10_homogeneous}.
In this homogeneous architecture, convergence is stronger compared to the standard ResNet. 
After approximately 12 residual blocks, the loss and accuracy trajectories stabilize, and no further improvement is observed. 
This suggests that the remaining residual blocks can be pruned without significant performance loss. 
The resulting SubResNet-12 achieves a test accuracy of 91.26\%, compared to the 91.64\% of the full ResNet-54, which is close to the 91.43\% obtained by training a SubResNet-12 by standard training (see Table \ref{tab:test_accuracies}). 
Hence, the proposed training formulation helps to determine the depth suitable for a given task.
The same convergence behavior is observed for the homogeneous ResNets trained on MNIST and CIFAR-100.
To close the performance gap after pruning, we plan to invest future research into fine-tuning for a small number of epochs.

Notably, our ablation using weight-sharing in the intermediate outputs, denoted as ResNet\textsubscript{WS}, revealed that a ResNet with additional parameters only achieved insignificant accuracy gains. This indicates that our stage-cost trained weight-sharing ResNet\textsubscript{WS} can be used to achieve comparable performance while using a reduced number of parameters.
 
Moreover, we can study the convergence in the residuals between consecutive intermediate outputs $\hat{\mathbf{y}}_{k+1} - \hat{\mathbf{y}}_{k}$ and the norm of the parameters are shown in Figure \ref{fig:cifar10_standard_norms} for the standard ResNet and in Figure \ref{fig:cifar10_homogeneous_norms} for the homogeneous ResNet (using 64 filters across layers).
We observe in Figure \hyperref[fig:cifar10_standard_norms]{4 (b)} and Figure \hyperref[fig:cifar10_homogeneous_norms]{5 (b)} that while the parameter norms converge for all ResNet variants, the parameters of the ResNet-54 trained with the stage cost converge faster than those of the same model trained without it. Especially in the homogeneous case, in Figure \hyperref[fig:cifar10_homogeneous_norms]{5 (a)}, the residual contribution of subsequent blocks vanishes when training with our stage cost loss. This underscores the ability to discard them from forward propagation entirely. When considering the standard setup in Figure \hyperref[fig:cifar10_standard_norms]{4 (a)}, this implication holds only for stages operating in the same representational space, in particular for the last stage. We plan to conduct more research on this effect to eventually be able to surgically prune blocks.

\begin{figure}[H]
	\centering
	
	\subfigure[Loss.]{%
		\includegraphics[trim=0 0 225 0, clip, height=3.9cm]{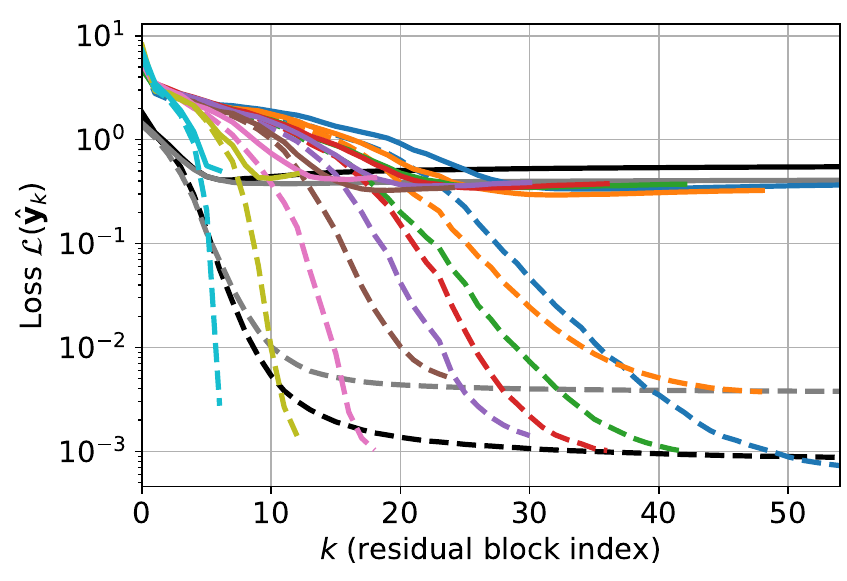}
		\label{fig:loss_cifar10_homogeneous}
	}
	\subfigure[Test accuracy.\hspace{2cm}]{%
		\includegraphics[trim=0 0 0 0, clip, height=3.9cm]{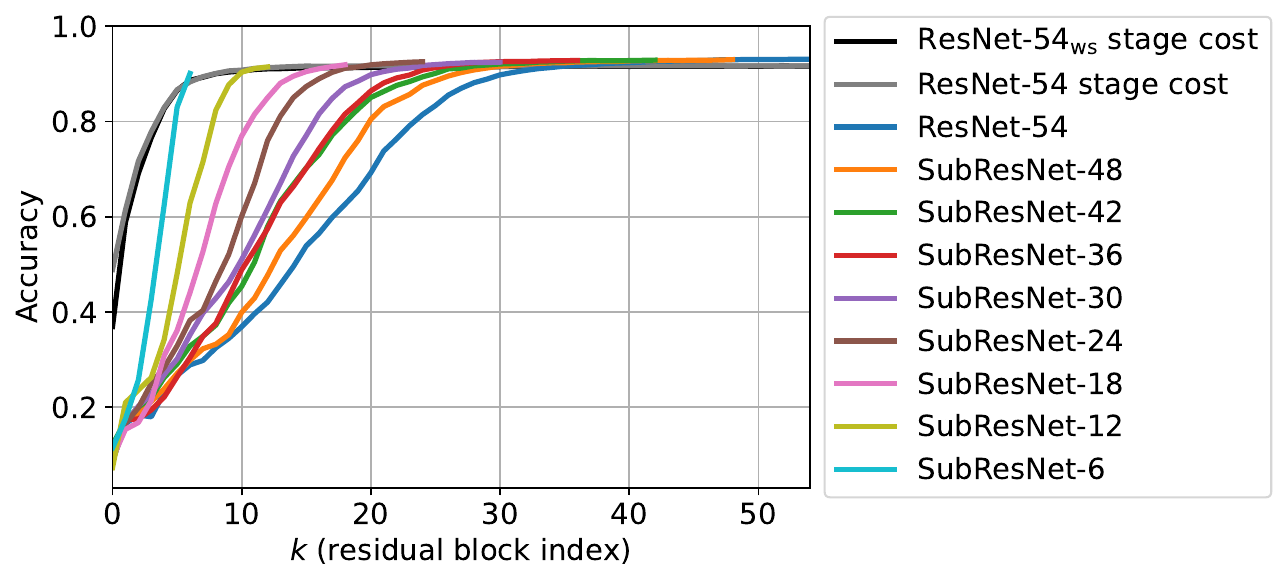}
		\label{fig:accuracy_cifar10_homogeneous}
	}
	\caption{Loss and accuracy trajectories for the ResNets with homogeneous architecture trained on CIFAR-10 with the stage cost loss and their SubResNets of varying depth. The dashed loss lines represent are the training losses and the solid lines are the test losses.}
	\label{fig:cifar10_homogeneous}
\end{figure}

\begin{figure}[H]
	\centering
	 \subfigure[Difference norms]{%
		\includegraphics[trim=0 0 225 0, clip, height=3.9cm]{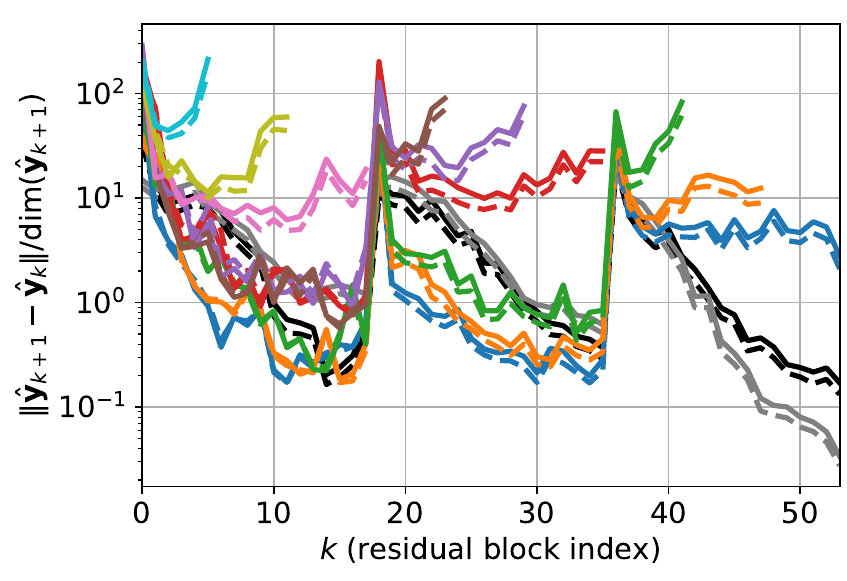}
	}
	\hfill
	\subfigure[Parameter norms.\hspace{2cm}]{%
		\includegraphics[height=3.9cm]{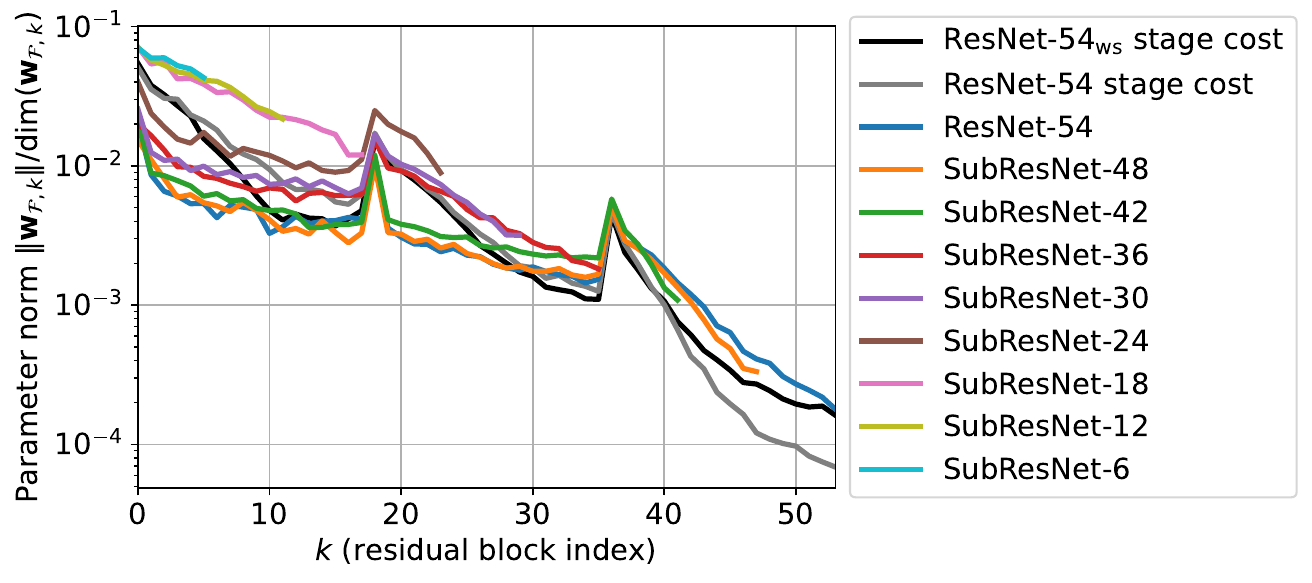}
	}
	\caption{Output residual norms and parameter norm residual trajectories for the standard ResNets trained on CIFAR-10 with the stage cost loss and their SubResNets of varying depth. The dashed loss lines represent are the training losses and the solid lines are the test losses.}
	\label{fig:cifar10_standard_norms}
\end{figure}

\begin{figure}[H]
	\centering
	\subfigure[Difference norms]{%
		\includegraphics[trim=0 0 225 0, clip, height=3.9cm]{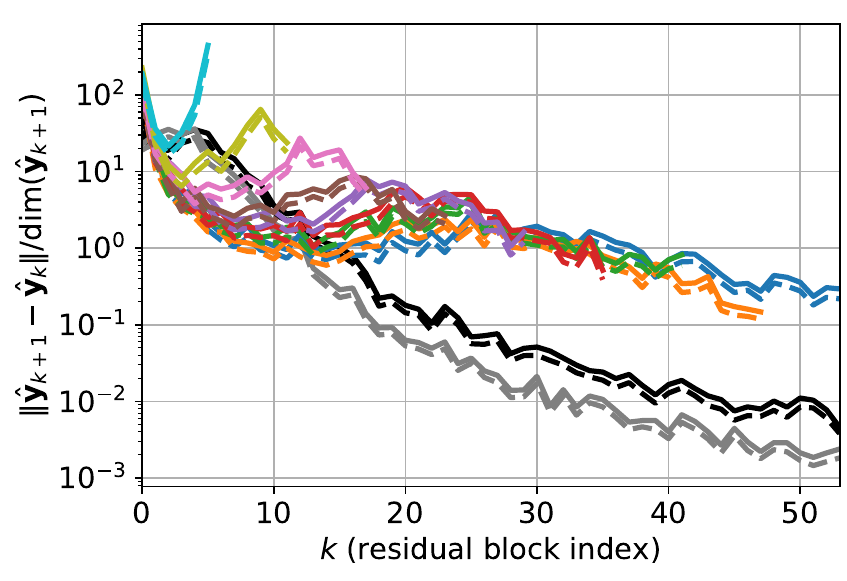}
	}
	\hfill
	\subfigure[Parameter norms.\hspace{2cm}]{%
		\includegraphics[height=3.9cm]{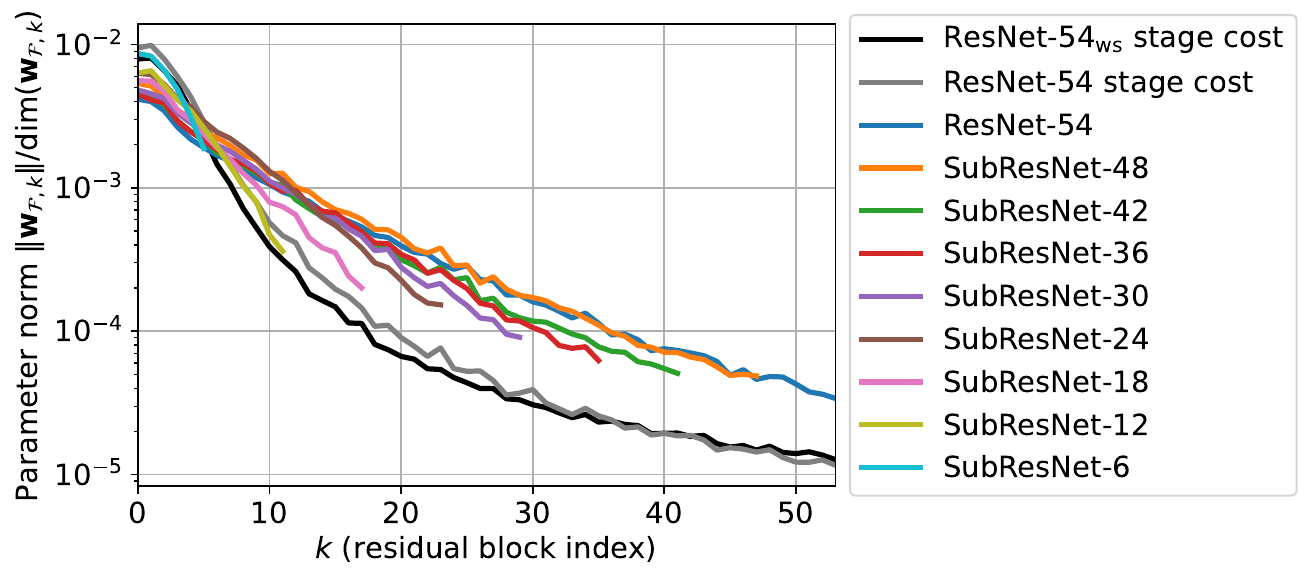}
	}
	\caption{Output residual norms and parameter norm residual trajectories for the homogeneous ResNets trained on CIFAR-10 with the stage cost loss and their SubResNets of varying depth. The dashed loss lines represent are the training losses and the solid lines are the test losses. }
	\label{fig:cifar10_homogeneous_norms}
\end{figure}

\section{Asymptotic Loss Bounds}
\label{sec:depth_bound}
In this appendix, we provide the formal results corresponding to Section \ref{sec:oc_perspective_resnets}. To prove Theorem \ref{thrm:deep_resnet_asymptotics_wd}, we build a rigorous setup with intermediate results. The main idea is based on the observation that with our proposed intermediate outputs \eqref{eq:proposed_early_exit}, the ResNet can learn an identity mapping between subsequent early exits.
\begin{lemma}[Identity mapping]
	\label{lemma:identity_mapping}
	Consider the ResNet \eqref{eq:simple_resnet_dynamics}, with the intermediate outputs chosen according to \eqref{eq:proposed_early_exit}. 
	Let the intermediate outputs of residual block $M$ produce the prediction $\hat{\mathbf{y}}_M$, and let the parameters of all following residual terms be zero, $\mathbf{w}_{\resterm, k} = 0$, $k = M, ..., N-1$. Then all subsequent intermediate outputs and the final output are equal
	\begin{equation}
		\label{eq:identity_mapping}
		\hat{\mathbf{y}} = \hat{\mathbf{y}} _k = \hat{\mathbf{y}}_M,
	\end{equation}
	for all $k = M, ..., N$.
\end{lemma}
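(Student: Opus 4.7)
The plan is to exploit the zeroing of the residual branches to collapse the ResNet recursion, for all blocks beyond $M$, into a pure composition of skip connections---precisely the composition appearing inside $\earlyexit_k$ in \eqref{eq:proposed_early_exit}. Once this collapse is established, the equality of all intermediate and final outputs from $\hat{\mathbf{y}}_M$ onward is obtained by direct substitution.

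Concretely, I would first invoke the standing convention that a residual branch built from a stack of convolutions and (non-affine) activations vanishes on zero weights, i.e., $\resterm_k(\sample_k, 0) = 0$ for every admissible input $\sample_k$. Combined with the convention $\skipterm_k = \mathrm{id}$ for $k \notin \skiplayerset$, both cases of \eqref{eq:simple_resnet_dynamics} reduce, for $k \geq M$, to the single recursion
\begin{equation*}
    \sample_{k+1} = \skipterm_k(\sample_k, \mathbf{w}_{\skipterm,k}).
\end{equation*}

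Next, I would prove by induction on $j$, starting at $j = M$, that $\sample_j = \skipterm_{j-1}(\cdots \skipterm_M(\sample_M, \mathbf{w}_{\skipterm,M}) \cdots, \mathbf{w}_{\skipterm,j-1})$; the base case $j=M$ is tautological and the inductive step is a single application of the collapsed recursion above. Specialising to $j=N$ and applying $\outblock$ recovers exactly $\earlyexit_M(\sample_M, \mathbf{w}_{\earlyexit,M}) = \hat{\mathbf{y}}_M$, so $\hat{\mathbf{y}} = \hat{\mathbf{y}}_M$. For an intermediate index $k$ with $M \leq k \leq N-1$, the definition \eqref{eq:proposed_early_exit} gives $\hat{\mathbf{y}}_k = \outblock(\skipterm_{N-1}(\cdots \skipterm_k(\sample_k, \mathbf{w}_{\skipterm,k}) \cdots), \mathbf{w}_\outblock)$; substituting the inductive formula for $\sample_k$ telescopes the inner chain back to the one starting at $\sample_M$, yielding $\hat{\mathbf{y}}_k = \hat{\mathbf{y}}_M$.

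The only mildly delicate point is the blanket assumption $\resterm_k(\cdot, 0) \equiv 0$: it is immediate for the standard preactivation residual block (BatchNorm--ReLU--Conv, repeated, without additive bias), but it would fail if the branch contained additive biases or affine normalisations evaluated at arbitrary running statistics. I would therefore state this as an explicit hypothesis on the residual parametrisation. Beyond this bookkeeping, the lemma has no analytic content; its real purpose is conceptual, namely to exhibit the identity-mapping equilibrium that enables the SubResNet-based upper bound in Theorem~\ref{thrm:deep_resnet_asymptotics_wd}.
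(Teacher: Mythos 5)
Your proposal is correct and follows essentially the same route as the paper's proof: zeroed residual weights give $\resterm_k(\cdot,0)=0$, so the state is propagated purely by the skip connections and the final output coincides with the composition defining the intermediate output $\earlyexit_M$ in \eqref{eq:proposed_early_exit}. Your explicit induction and the flagged hypothesis that the residual branch vanishes on zero weights (e.g.\ no additive biases) are just a more careful write-up of what the paper asserts in one line.
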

\begin{proof}
	If $\mathbf{w}_{\resterm,k}=0$, for the residual blocks $k= M, ..., N-1$, then $\resterm_k(\mathbf{x}_k, 0) = 0$ for each of these blocks.
	Hence, the state is being propagated by $\mathbf{x}_{k+1} = \skipterm_k(\mathbf{x}_k, \mathbf{w}_{\skipterm,k})$ for all $k= M, ..., N-1$. 
	This gives the output $\hat{\mathbf{y}} = \outblock(\mathbf{x}_N, \mathbf{w}_\outblock)$ and the intermediate outputs \eqref{eq:proposed_early_exit}, which corresponds to \eqref{eq:identity_mapping}.
\end{proof}

\begin{defintion}[SubResNet architecture]
	\label{def:subresnet}
	We refer to the architecture of ResNet-$M$ with $M$ residual blocks	\footnote{For simplicity, we refer to the network architecture for $M= 0$, which only consists of the input stem and the output layer, as a ResNet, despite its lack of skip connections.}
	as a SubResNet architecture of ResNet-$N$ with $N$ residual blocks,  $N > M$, if the architecture of the output $\outblock^M$ of ResNet-$M$ is shared with the architecture of the intermediate output $\earlyexit^N_M$ of ResNet-$N$.
	That is, if all residual blocks of ResNet-$M$ are contained in ResNet-$N$, i.e. $f^M_k = f^N_k$ for all $k = 0, ..., M-1$.
	Moreover, if ResNet-$N$ has non identity skip connections in residual blocks $k \geq M$, then those need to be included in the output of ResNet-$M$, i.e. $\outblock^M(x, \mathbf{w}_\mathrm{\outblock}^M) =\earlyexit^N_k(x, \mathbf{w}_{\earlyexit, k}^N)$, with 
	$\mathbf{w}_\outblock^M  = \left[ \mathbf{w}^N_{\skipterm,M}, ..., \mathbf{w}^N_{\skipterm,N-1}, \mathbf{w}_{\outblock}^N\right]$.
\end{defintion}
This definition of the SubResNet architecture implies that the identity mapping (Lemma \ref{lemma:identity_mapping}) also holds between a ResNet and its SubResNets.
\begin{lemma}[Identity mapping for SubResNets]
	\label{lemma:identity_mapping_subresnets}
	Consider ResNet-$N$ with $N$ residual blocks and ResNet-$M$ with a SubResNet architecture (Definition \ref{def:subresnet}), consisting of $M$ residual blocks and producing the output $\hat{\mathbf{y}}^{M}$.
	Let the parameters $\bar{\mathbf{w}}^N$ of the first $M$ residual blocks of ResNet-$N$ be equal to those of ResNet-$M$, i.e. $\bar{\mathbf{w}}_k^N = \mathbf{w}_k^M$ $k = 0, ..., M-1$, and choose the parameters of following residual terms equal to zero, $\bar{\mathbf{w}}^N_{\resterm, k} = 0$ for $k = M, ..., N-1$.
	Then, all intermediate outputs deeper than $M$ and the output of ResNet-$N$ are equal to the output of ResNet-$M$, i.e., $\hat{\mathbf{y}}^{N}_k = \hat{\mathbf{y}}^{N}= \hat{\mathbf{y}}^{M}$ for all $k = M, ..., N$.
\end{lemma}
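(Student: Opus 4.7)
The plan is to split the claim into two steps and then compose them: (i) show that all intermediate outputs of ResNet-$N$ from block $M$ onward, together with its final output, collapse onto $\hat{\mathbf{y}}^N_M$; and (ii) identify $\hat{\mathbf{y}}^N_M$ with the SubResNet output $\hat{\mathbf{y}}^M$. Step (i) is immediate from Lemma~\ref{lemma:identity_mapping}: the hypothesis forces $\bar{\mathbf{w}}^N_{\resterm, k} = 0$ for all $k = M, \ldots, N-1$, so the lemma applied to ResNet-$N$ yields $\hat{\mathbf{y}}^N = \hat{\mathbf{y}}^N_k = \hat{\mathbf{y}}^N_M$ for all $k = M, \ldots, N$.

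For step (ii), I would run a short induction on $k = 0, \ldots, M$ establishing $\mathbf{x}^N_k = \mathbf{x}^M_k$. The base case uses the input stem sharing that is implicit in the SubResNet architecture, giving $\mathbf{x}^N_0 = \inblock(x, \mathbf{w}_\inblock) = \mathbf{x}^M_0$. For the inductive step, Definition~\ref{def:subresnet} supplies $f^N_k = f^M_k$ for $k = 0, \ldots, M-1$, while the hypothesis $\bar{\mathbf{w}}^N_k = \mathbf{w}^M_k$ ensures the recursion in \eqref{eq:simple_resnet_dynamics} acts identically on the common state, so equality propagates through $k = M$. It then remains to apply the parameter identification provided by Definition~\ref{def:subresnet}, namely $\outblock^M(\cdot, \mathbf{w}^M_\outblock) = \earlyexit^N_M(\cdot, \mathbf{w}^N_{\earlyexit, M})$ under the correspondence $\mathbf{w}^M_\outblock = [\mathbf{w}^N_{\skipterm,M}, \ldots, \mathbf{w}^N_{\skipterm,N-1}, \mathbf{w}^N_\outblock]$. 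Evaluating both heads at the common state $\mathbf{x}^N_M = \mathbf{x}^M_M$ then yields $\hat{\mathbf{y}}^N_M = \hat{\mathbf{y}}^M$.

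Chaining (i) and (ii) gives $\hat{\mathbf{y}}^N_k = \hat{\mathbf{y}}^N = \hat{\mathbf{y}}^M$ for all $k = M, \ldots, N$, which is precisely the claim. I do not anticipate a serious technical obstacle, since the argument is essentially structural bookkeeping; the main point that requires care is a faithful application of the parameter correspondence prescribed by Definition~\ref{def:subresnet}, so that the SubResNet output head and the deeper network's $M$-th intermediate output head act as the same function on the common hidden state. Once that identification is in place, Lemma~\ref{lemma:identity_mapping} closes the argument.
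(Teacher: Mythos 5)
Your argument is correct and matches the paper's intended route: the paper states that the lemma "follows directly from Lemma~\ref{lemma:identity_mapping}" and omits the details, and your two steps — collapsing the outputs from block $M$ onward via Lemma~\ref{lemma:identity_mapping}, then identifying $\hat{\mathbf{y}}^N_M$ with $\hat{\mathbf{y}}^M$ through the shared states and the parameter correspondence of Definition~\ref{def:subresnet} — are precisely that omitted bookkeeping spelled out.
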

The proof follows directly from Lemma \ref{lemma:identity_mapping} and is thus omitted.
We are now ready to prove our main result, which is repeated for the ease of readability. 
\newtheorem*{T1}{Theorem~1 (ResNet asymptotics with weight decay)}
\begin{T1}
	Consider a ResNet-$N$ with $N$ residual blocks trained by \eqref{eq:training_problem} with stage cost weight $\gamma > 0$, 
	and a weight decay $\dfrac{\lambda }{2} \lVert\mathbf{w}^N_{\resterm,k}\rVert^2$ on the parameters on the residual terms $\mathbf{w}^N_{\resterm,k}$.
	Let ResNet-M be its SubResNet (Definition \ref{def:subresnet}), trained without the stage cost but the same weight decay.
	Let $\bar{\mathcal{L}}=\lossfunc(\hat{\mathbf{y}}^M)$ be the loss function of the output of ResNet-M.
	Then, the training objective for the optimal parameters $\mathbf{w}^N$ ResNet-$N$ satisfies the upper bound
	\begin{equation*}
		J_N(\mathbf{w}^N) = \sum_{k = 0}^{N-1} \left[ \gamma \lossfunc(\hat{\mathbf{y}}^N_k)  +
		\frac{\lambda}{2} \lVert\mathbf{w}^N_{\resterm,k}\rVert^2 \right] + 
		\lossfunc(\hat{\mathbf{y}}^N) 
		\leq 
		\bar{J}_N,
	\end{equation*}
	with $\bar{J}_N = \sum_{k = 0}^{M-1} \left[\gamma \lossfunc(\hat{\mathbf{y}}^M_k) + \frac{\lambda}{2} \lVert\mathbf{w}^M_{\resterm,k}\rVert^2 \right]
	+ (1 + \gamma(N - M)) \bar{\mathcal{L}}$ based on the objective of ResNet-$M$.
\end{T1}
\begin{proof}
	Let $\hat{\mathbf{y}}^M_k$ denote the trajectory of the intermediate outputs of the ResNet-M.
	Then by Lemma \ref{lemma:identity_mapping_subresnets}, the feasible parameters $\bar{\mathbf{w}}^N$ of for ResNet-N, with $\bar{\mathbf{w}}_k^N = \mathbf{w}_k^M$ $k = 0, ..., M-1$ and $\bar{\mathbf{w}}^N_{\resterm, k} = 0$ for $k = N, ..., M-1$, produce the trajectory of intermediate outputs 
	\begin{equation*}
		\bar{\mathbf{y}}^N_k = \begin{cases}
			\hat{\mathbf{y}}^M_k & \text{if }k < M \\
			\hat{\mathbf{y}}^M_{M} & \text{if } k \geq M,
		\end{cases}
	\end{equation*}
	for all $k=0, ..., N$.
	The feasible parameters $\bar{\mathbf{w}}^N$ induce in the objective with stage cost loss
	\begin{equation*}
		\bar{J}_N = J_N(\bar{\mathbf{w}}^N)
		=\sum_{k = 0}^{M-1} \left[\gamma \lossfunc(\hat{\mathbf{y}}^M_k)  + 
		\frac{\lambda}{2} \lVert\bar{\mathbf{w}}^M_{\resterm,k} \rVert^2  \right]
		+ ( \gamma(N - M) + 1) \bar{\mathcal{L}}.
	\end{equation*}
	Which is an upper bound for the optimal intermediate outputs $\hat{\mathbf{y}}^N_k$ and parameters $\mathbf{w}^N$ obtained from training with the stage cost loss
	\begin{equation*}
		J_N(\mathbf{w}^N) = \sum_{k = 0}^{N-1} \left[\gamma \lossfunc(\hat{\mathbf{y}}^N_k)  +
		\frac{\lambda}{2} \lVert \mathbf{w}^N_{\resterm,k}\rVert^2 \right]
		+ \lossfunc(\hat{\mathbf{y}}^N_{N}) 
		\leq 	\bar{J}_N.
	\end{equation*}
\end{proof}

The second bound is an extension of Theorem \ref{thrm:deep_resnet_asymptotics_wd} for the training with stage cost loss in the absence of weight decay.
\begin{theorem}[ResNet asymptotics without weight decay]
	\label{lemma:deep_resnet_asymptotics}
	Consider a ResNet-N trained with stage cost, $\gamma > 0$, and $N$ residual blocks and its SubResNet architecture (Definition \ref{def:subresnet}) ResNet-M trained without stage cost.
	Let $\bar{\mathcal{L}}=\lossfunc(\hat{\mathbf{y}}^M)$ be the loss function of the output of ResNet-M.
	Then, the average loss along the intermediate outputs $\mathcal{L}_\mathrm{avg}$ of ResNet-N satisfies
	\begin{equation}
		\mathcal{L}_\mathrm{avg}= \frac{1}{N+1} \sum_{k = 0}^{N} \lossfunc(\hat{\mathbf{y}}^N_k) \leq \bar{ \mathcal{L}}_\mathrm{avg} = \bar{\mathcal{L}} + \frac{C}{N+1},
		\label{eq:deep_resnet_asymptotics}
	\end{equation}
	for $C = \sum_{k = 0}^{M-1} \lossfunc(\hat{\mathbf{y}}^M_k)  + \frac{1 - \gamma -\gamma M}{\gamma} \bar{\mathcal{L}} $ relating to the performance of the shallower network ResNet-M.
\end{theorem}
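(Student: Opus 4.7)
The plan is to mirror the construction used in Theorem~\ref{thrm:deep_resnet_asymptotics_wd} and close with a short algebraic rearrangement to reach the averaged form. I would first construct a feasible parameter vector $\bar{\mathbf{w}}^N$ for ResNet-$N$ by copying the parameters of ResNet-$M$ into its first $M$ residual blocks and zeroing out every residual term with index $k \geq M$. Lemma~\ref{lemma:identity_mapping_subresnets} then forces the intermediate trajectory to satisfy $\bar{\mathbf{y}}^N_k = \hat{\mathbf{y}}^M_k$ for $k < M$ and $\bar{\mathbf{y}}^N_k = \hat{\mathbf{y}}^M$ for $k \geq M$, exactly as in the proof of Theorem~\ref{thrm:deep_resnet_asymptotics_wd}.

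Plugging this trajectory into the stage-cost objective (no weight-decay term this time) yields $J_N(\bar{\mathbf{w}}^N) = \gamma \sum_{k=0}^{M-1} \lossfunc(\hat{\mathbf{y}}^M_k) + (1 + \gamma(N-M))\bar{\mathcal{L}}$. Optimality of $\mathbf{w}^N$ then gives $\gamma \sum_{k=0}^{N-1} \lossfunc(\hat{\mathbf{y}}^N_k) + \lossfunc(\hat{\mathbf{y}}^N) \leq \gamma \sum_{k=0}^{M-1} \lossfunc(\hat{\mathbf{y}}^M_k) + (1 + \gamma(N-M))\bar{\mathcal{L}}$, the weight-decay-free analogue of the bound established in Theorem~\ref{thrm:deep_resnet_asymptotics_wd}.

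To reach the averaged form I would rewrite the left-hand side as $\gamma \sum_{k=0}^{N} \lossfunc(\hat{\mathbf{y}}^N_k) + (1-\gamma)\lossfunc(\hat{\mathbf{y}}^N)$ (using $\hat{\mathbf{y}}^N_N = \hat{\mathbf{y}}^N$), drop the non-negative term $(1-\gamma)\lossfunc(\hat{\mathbf{y}}^N)$, divide through by $\gamma$ and then by $N+1$. Reading off the coefficient of $\bar{\mathcal{L}}$ via the identity $\frac{1+\gamma(N-M)}{\gamma} - (N+1) = \frac{1-\gamma-\gamma M}{\gamma}$ recovers exactly the constant $C$ stated in the theorem, yielding the claimed $\mathcal{L}_\mathrm{avg} \leq \bar{\mathcal{L}} + \frac{C}{N+1}$.

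The only delicate step is the absorption of $(1-\gamma)\lossfunc(\hat{\mathbf{y}}^N)$: it relies on $\lossfunc \geq 0$ together with $\gamma \in (0,1]$, the regime of practical interest (the experiments use $\gamma = 0.02$). For $\gamma > 1$ the same strategy still applies, but the surplus $(\gamma-1)\lossfunc(\hat{\mathbf{y}}^N)$ needs to be routed to the right-hand side and controlled by an upper bound on the terminal loss, which would slightly alter the closed form of $C$. Apart from this caveat, the proof is a direct specialization of the weight-decay argument in Theorem~\ref{thrm:deep_resnet_asymptotics_wd} combined with Lemma~\ref{lemma:identity_mapping_subresnets}.
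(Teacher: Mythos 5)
Your proposal is correct and follows essentially the same route as the paper: the same feasible-parameter construction via Lemma~\ref{lemma:identity_mapping_subresnets}, the same evaluation of $J_N(\bar{\mathbf{w}}^N)$, and the same absorption of the terminal loss using $\gamma\le 1$ and $\lossfunc\ge 0$ (the paper invokes this implicitly via "Using that $\gamma\le 1$"), followed by the identical rearrangement yielding $C$. Your explicit flagging of the $\gamma\in(0,1]$ and nonnegativity requirements is a fair observation, but it does not change the argument.
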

\begin{proof}
	Let $\hat{\mathbf{y}}^M_k$ denote the trajectory of intermediate outputs of ResNet-M. 
	Then by Lemma \ref{lemma:identity_mapping_subresnets}, choosing the parameters of for ResNet-N $\bar{\mathbf{w}}^N$ as $\bar{\mathbf{w}}_k^N = \mathbf{w}_k^M$ $k = 0, ..., M-1$ and $\bar{\mathbf{w}}^N_{\resterm, k} = 0$ for $k = M, ..., N-1$ produce the trajectory of intermediate outputs 
	\begin{equation*}
		\bar{\mathbf{y}}^N_k = \begin{cases}
			\hat{\mathbf{y}}^M_k & \text{if }k < M \\
			\hat{\mathbf{y}}^M_{N} & \text{if }k \geq N,
		\end{cases}
	\end{equation*}
	for $k = 0,..., N-1$.
	This feasible trajectory of parameters $\bar{\mathbf{w}}^N$ and intermediate outputs $\bar{\mathbf{y}}^N$ induces in the objective with stage cost
	\begin{equation*}
		J_N(\bar{\mathbf{w}}^{N})= \sum_{k = 0}^{N-1} \gamma \lossfunc(\bar{\mathbf{y}}^N_k)  + \lossfunc(\bar{\mathbf{y}}^N_{N}) 
		=\sum_{k = 0}^{M-1} \gamma \lossfunc(\hat{\mathbf{y}}^M_k)  + ( \gamma(N - M) + 1) \bar{\mathcal{L}}.
	\end{equation*}
	This is an upper bound to training objective of the trained ResNet-N trained with the stage cost loss with the optimal parameters $\mathbf{w}^N$ and intermediate outputs $\hat{\mathbf{y}}^M$
	\begin{equation*}
		J_N(\mathbf{w}^N) = \sum_{k = 0}^{N-1} \gamma \lossfunc(\hat{\mathbf{y}}^N_k)  + \lossfunc(\hat{\mathbf{y}}^N_{N}) 
		\leq 	J_N(\bar{\mathbf{w}}^N) = \sum_{k = 0}^{M-1} \gamma \lossfunc(\hat{\mathbf{y}}^M_k)  + (1 + \gamma(N - M)) \bar{\mathcal{L}}.
	\end{equation*}
	Using that $\gamma \leq 1$, we obtain
	\begin{equation*}
		\sum_{k = 0}^{N} \gamma \lossfunc(\hat{\mathbf{y}}^N_k) \leq \sum_{k = 0}^{M-1} \gamma \lossfunc(\hat{\mathbf{y}}^M_k)  + (1 + \gamma(N - M)) \bar{\mathcal{L}}.
	\end{equation*}
	With $\bar{\mathcal{L}} = \gamma \bar{\mathcal{L}} + (1-\gamma)\bar{\mathcal{L}}$ and by rearranging we obtain
	\begin{equation}
		\sum_{k = 0}^{N} \gamma \lossfunc(\hat{\mathbf{y}}^N_k) \leq \sum_{k = 0}^{M-1} \gamma \lossfunc(\hat{\mathbf{y}}^N_k)  + (1-\gamma - \gamma M) \bar{\mathcal{L}} + \gamma (N +1) \bar{\mathcal{L}}. 
		\label{eq:lemma_avg_loss_full_resnet}
	\end{equation}
	Dividing by $\gamma (N+1)$ gives \eqref{eq:deep_resnet_asymptotics} with $C = \sum_{k = 0}^{M-1} \lossfunc(\hat{\mathbf{y}}^M_k)  + \frac{1 - \gamma -\gamma M}{\gamma} \bar{\mathcal{L}} $.
\end{proof}
This implies that asymptotically, for $N \to \infty$, the average loss of of the ResNet trained with the stage cost loss will be equal to that of the shallower SubResNet.

However, we emphasize that these bounds focus solely on the training dynamics, and no guarantees are provided regarding validation performance.
Our theoretical results require that the empirical loss is optimized to global optimality.
Moreover, if a loss function that attains its minimum at zero is used, e.g., the L2 loss, and assuming that there exists a SubResNet of depth $M$ such that $\bar{\mathcal{L}} =0$, then this bound tightens to $\mathcal{L}_\mathrm{avg} \leq \frac{C}{N+1}$.

The results of the bounds for CIFAR-10 are shown in Table~\ref{tab:results_loss_bounds_cifar_10_standard} for the standard ResNet and in Table~\ref{tab:results_loss_bounds_cifar_10_homogeneous} for the homogeneous ResNet.
For the homogeneous ResNet, we find Bound 1 on the training objective with weight decay (Theorem~\ref{thrm:deep_resnet_asymptotics_wd}) to be moderately tight.
The bound obtained from the SubResNet of depth 6 is closest, since it is the shallowest network that achieves a sufficiently low training loss.
The SubResNet of depth 0, which cannot fit the dataset properly, leads to a looser depth bound due to its larger value of $\bar{\mathcal{L}}$.
For deeper SubResNets, the summation of the higher loss function values of the intermediate outputs leads to an overly conservative bound.
We evaluate the Bound 2 from Theorem~\ref{lemma:deep_resnet_asymptotics} for the training without weight decay for the trajectories obtained by training with small weight decay of $\lambda=10^{-4}$.
The results on the average loss function are less tight due to the higher loss values for the outputs of the first residual blocks of the ResNet trained by standard training.
In contrast to Bound 1, these are not weighted by the stage cost loss weight $\gamma$.

In general, we find that for the standard ResNet, our bounds are less tight. One reason is this difference between the representational abilities of the full ResNet and the shallower SubResNets, which only contain residual blocks with 16 and 32 convolution filters.

\begin{table}[H]
	\centering
	\caption{Results of loss bounds for the standard ResNet trained on CIFAR-10.}
	\label{tab:results_loss_bounds_cifar_10_standard}
	\begin{tabular}{lccccccc}
\toprule
SubResNet &  $\bar{\mathcal{L}}$ & \multicolumn{3}{c}{\textbf{Bound 1} (Theorem \ref{thrm:deep_resnet_asymptotics_wd})} & \multicolumn{3}{c}{\textbf{Bound 2} (Theorem \ref{lemma:deep_resnet_asymptotics})} \\
depth $M$ & & $J_M(w^M)$ & $\bar{J}_M$ & $\bar{J}_M - J_M(w^M)$ & $\mathcal{L}_\mathrm{avg}$ & $\bar{\mathcal{L}}_\mathrm{avg}$ & $\bar{\mathcal{L}}_\mathrm{avg} - \mathcal{L}_\mathrm{avg}$ \\
\midrule
0 & 1.326 & 0.570 & 2.758 & 2.187 & 0.340 & 2.507 & 2.167 \\
6 & 0.327 & 0.570 & 1.173 & 0.603 & 0.340 & 21.238 & 20.898 \\
12 & 0.252 & 0.570 & 1.186 & 0.616 & 0.340 & 28.063 & 27.723 \\
18 & 0.228 & 0.570 & 1.274 & 0.704 & 0.340 & 34.451 & 34.112 \\
24 & 0.072 & 0.570 & 1.549 & 0.979 & 0.340 & 58.629 & 58.289 \\
30 & 0.006 & 0.570 & 2.128 & 1.558 & 0.340 & 90.931 & 90.591 \\
36 & 0.004 & 0.570 & 2.409 & 1.839 & 0.340 & 103.758 & 103.418 \\
42 & 0.002 & 0.570 & 2.189 & 1.618 & 0.340 & 93.293 & 92.953 \\
48 & 0.001 & 0.570 & 2.253 & 1.682 & 0.340 & 96.334 & 95.994 \\
54 & 0.002 & 0.570 & 2.232 & 1.662 & 0.340 & 94.822 & 94.483 \\
\bottomrule
\end{tabular}

\end{table}

\begin{table}[H]
	\centering
	\caption{Results of loss bounds for the homogeneous ResNet trained on CIFAR-10.}
	\label{tab:results_loss_bounds_cifar_10_homogeneous}
	\begin{tabular}{lccccccc}
\toprule
\textbf{SubResNet} & $\bar{\mathcal{L}}$ & \multicolumn{3}{c}{\textbf{Bound 1} Theorem \ref{thrm:deep_resnet_asymptotics_wd})} & \multicolumn{3}{c}{\textbf{Bound 2} (Theorem \ref{lemma:deep_resnet_asymptotics})} \\
\textbf{depth} $M$ &  & $J_M(w^M)$ & $\bar{J}_M$ & $\bar{J}_M - J_M(w^M)$ & $\mathcal{L}_\mathrm{avg}$ & $\bar{\mathcal{L}}_\mathrm{avg}$ & $\bar{\mathcal{L}}_\mathrm{avg} - \mathcal{L}_\mathrm{avg}$ \\
\midrule
0 & 1.188 & 0.245 & 2.470 & 2.225 & 0.087 & 2.246 & 2.159 \\
6 & 0.003 & 0.245 & 0.421 & 0.176 & 0.087 & 13.275 & 13.188 \\
12 & 0.001 & 0.245 & 0.537 & 0.292 & 0.087 & 18.518 & 18.431 \\
18 & 0.001 & 0.245 & 0.574 & 0.330 & 0.087 & 20.260 & 20.173 \\
24 & 0.005 & 0.245 & 0.728 & 0.483 & 0.087 & 25.312 & 25.225 \\
30 & 0.001 & 0.245 & 0.761 & 0.516 & 0.087 & 28.229 & 28.142 \\
36 & 0.001 & 0.245 & 0.813 & 0.568 & 0.087 & 30.843 & 30.756 \\
42 & 0.001 & 0.245 & 0.801 & 0.556 & 0.087 & 30.287 & 30.200 \\
48 & 0.004 & 0.245 & 0.925 & 0.680 & 0.087 & 34.440 & 34.353 \\
54 & 0.001 & 0.245 & 0.942 & 0.697 & 0.087 & 36.772 & 36.685 \\
\bottomrule
\end{tabular}

\end{table}

\end{document}